\documentclass[12pt, final]{l4dc2022}


 
\DeclareMathOperator*{\rootfind}{RootFind}
\DeclareMathOperator*{\odesolve}{Integrator} 
\title[Symplectic Momentum Neural Networks]{Symplectic Momentum Neural Networks - Using Discrete Variational Mechanics as a prior in Deep Learning }

\usepackage{times}
\usepackage{hyperref}



\author{%
 \Name{Saul Santos} \Email{saul.r.santos@tecnico.ulisboa.pt}\\
 \Name{Monica Ekal} \Email{mekal@isr.tecnico.ulisboa.pt}\\
  \Name{Rodrigo Ventura} \Email{rodrigo.ventura@isr.ist.utl.pt}\\
 \addr Instituto Superior Técnico, Av. Rovisco Pais 1, 1049-001 Lisbon\\%
 \addr Instituto de Sistemas e Robótica, Instituto Superior Técnico, Av. Rovisco Pais 1, 1049-001 Lisbon
}

\begin{document}

\maketitle

\begin{abstract}%
With deep learning  gaining attention from the research community for prediction and control of real physical systems, learning important representations is becoming now more than ever mandatory. It is of extreme importance that deep learning representations are coherent with physics. When learning from discrete data this can be guaranteed by including some sort of prior into the learning, however, not all discretization priors preserve important structures from the physics. In this paper, we introduce Symplectic Momentum Neural Networks (SyMo) as models from a discrete formulation of mechanics for non-separable mechanical systems. The combination of such formulation leads SyMos to be constrained towards preserving important geometric structures such as momentum and a symplectic form and learn from limited data. Furthermore, it allows to learn dynamics only from the poses as training data. We extend SyMos to include variational integrators within
the learning framework by developing an implicit root-find layer which leads to End-to-End Symplectic Momentum Neural Networks (E2E-SyMo). Through experimental results, using the pendulum and cartpole, we show that such combination not only allows these models to learn from limited data but also provides the models with the capability of preserving the symplectic form and show better long-term behaviour. 
\end{abstract}

\begin{keywords}%
  Physics-Informed learning, Variational Integrators, Deep Learning, Discrete Mechanics%
\end{keywords}

\section{Introduction}
Even though, Neural Networks have the ability to achieve astonishing results and being practically at the forefront of areas such as Computer Vision \cite{NIPS2012_c399862d} or natural language \cite{collobert_weston_2008}, these black-box parameterizations still lack of intuition about physics. In fact, traditional deep learning techniques rely on a search-and-evaluate procedure through a broad spectrum of candidate hypotheses and are limited to the characteristics of the data. By enforcing physical intuition to such strong models, explicitly or implicitly, is thus possible to narrow the hypothesis space to a range where these models have the ability to reproduce physically consistent results.

The application of even the state-of-the-art black box models has very frequently met with limited success in scientific domains due to their large data requirements, inability to produce physically consistent results, and their lack of generalizability to out-of-sample scenarios \cite{Willard2020}. By needing large amount of training data in order to generalize well to unseen states another problem comes up which is the fact that such data can be prohibitively expensive to obtain. 

In this work we are interested in combining physical priors with deep learning techniques for mechanical system modeling. Further, we extend this combination to account for numerical integration. Firstly, by combining continuous variational principles with discretization schemes by Neural Ordinary Differential Equations (NODE) \cite{chen2018neuralode} through conventional integrators. Secondly, the novelty resides on a family of geometric integrators called \textit{Variational Integrators} that have the particularity of preserving geometric structures of the continuous system \cite{marsden_west_2001}, in contrast to traditional integrators. Variational Integrators have the property of being symplectic and preserve the momentum (figure \ref{fig:phase_space_pendulum}), or its change in the presence of external forces \cite{johnson_schultz_murphey_2015}. Although symplectic-momentum integrators do not preserve energy exactly they exhibit bounded energy behavior achieving near energy conservation and thus they are extremely useful for long term simulations, such as for low thrust spacecraft missions \cite{junge}. Our contribution handles elegantly non-separable mechanical systems as we derive efficient back-propagation to grip with the implicit equations arising from the discrete variational principles. 

\begin{figure}[htb!]
  \centering
  \includegraphics[width=0.7\textwidth]{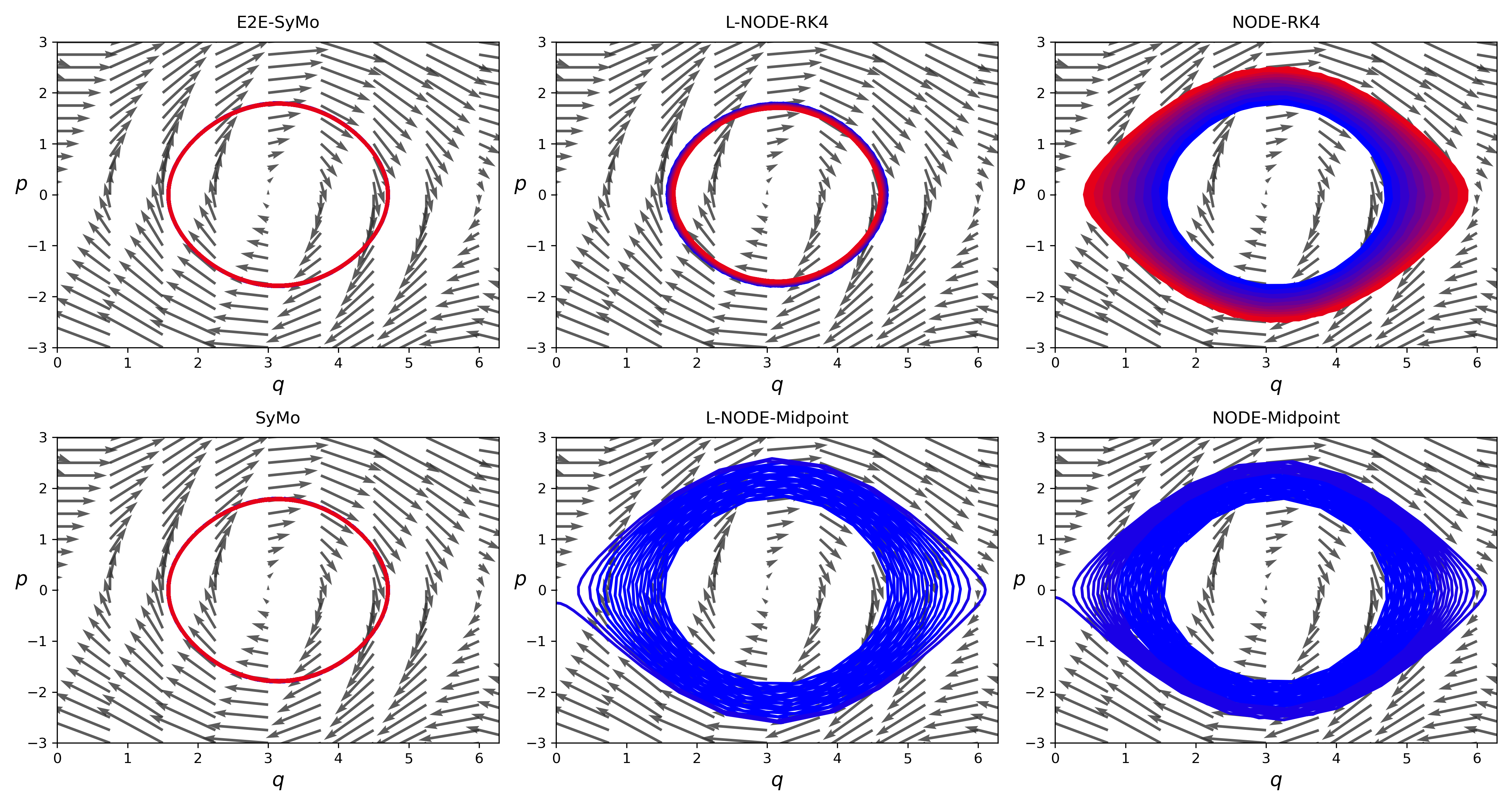}
  \caption[SyMo - Symplectic-Momentum Neural Networks.]{Phase Space of the simulated models for the pendulum system alongside the ground truth fields. Integration for 4000 time-steps with time-step $h=0.1$. NODE consists in a Neural ODE learning directly the dynamics from data with Runge kutta 4th-order and midpoint as ODE solvers. L-NODE consists of using the Lagrangian dynamics of mechanical systems as a model prior with the same ODE solvers of NODEs. SyMo results in the combination of discrete mechanics with deep learning. E2E-SyMo accounts for variational integration through a root finding implicit layer. SyMo and E2E-SyMo preserve the symplectic form while the other models drift away from the ground truth.}
  \label{fig:phase_space_pendulum}
\end{figure}

\section{Related Work}
Two recent developments were key to the developed approach. Firstly, in \cite{Raissi2017} use state-of-the-art deep learning techniques for solving Nonlinear Partial Differential Equations in a learning fashion with physical plausibility and good generalization properties. Secondly, \cite{Raissi2018} extended his work not only to learn the solution but also to discover the nonlinear partial differential equation. This lead researchers to try to mimic these approaches into a several range of applications. Specifically, \cite{Lutter2019}  proposed learning architectures that impose Lagrangian mechanics in a structured approach. This methodology is optimized to minimize the violation of the Euler-Lagrange Ordinary Differential Equation directly from data and further applied to energy-based control problems \cite{Lutter2019a}. This work is generalized to include all forms of Lagrangians in \cite{Cranmer2020} and extended to Hamiltonian mechanics in \cite{Greydanus2019}.

All the methods mentioned above assume continuous-time equations of motion for dynamical systems. These equations encode the underlying physical properties, such as symmetries corresponding to conservation laws. However, when learning from discrete data these continuous models need to be discretized. With this in mind, researchers started not only to combine physical priors with deep learning but also with numerical integrators. In \cite{chen2018neuralode} proposed NODEs, differentiable ode solvers with $O(1)$ memory backpropagation. This lead researchers to propose algorithms that combine methods with prior knowledge about the underlying equations of motion with geometric discretization schemes. Particularly in \cite{Zhong2019a}, authors proposed learning the dynamics of ordinary differential equation by combining Hamiltonian Mechanics with NODES and in \cite{srnn}, where authors leverage symplectic integration with neural networks. In \cite{Saemundsson2019} authors combine variational integrators with deep learning techniques. However, this approach assumes only separable Hamiltonians (inertia matrix does not depend on the configuration). Our work is capable of handling implicit equations arising from non-separable mechanical systems.

\section{Background}
\subsection{Continuous Lagrangian Dynamics}
Any mechanical system can be represented by a set of generalized coordinates, $q \in {\mathbb{R}^n}$, that define completely the configuration of the mechanical model. The Lagrangian mechanics define the scalar Lagrangian, $\mathcal{L}$,  as the difference between the kinetic energy, $T$, and  the potential energy, $V$. For rigid bodies the kinetic energy takes the  well-known quadratic form $T(q, \dot{q}) = \frac{1}{2}\dot{q}^TH(q)\dot{q}$ :
\begin{equation}
\label{lagrangian1}
    \mathcal{L} = T(q, \dot{q}) - V(q)= \frac{1}{2}\dot{q}^TH(q)\dot{q} - V(q),
\end{equation}
where $H \in \mathbb{R}^{n \times n}$ is called \textit{generalized inertia matrix} \cite{Murray94amathematical}, and by nature is represented as a positive definite matrix. 

In the forced case, one can extend the Hamiltonian principle to include non-conservative forcing such as external inputs or dissipative forces by using the Lagrange-d’Alembert principle \cite{Ober-Blobaum2011}, that seeks paths satisfying:
\begin{equation}
\label{action}
    \delta \int_{t_0}^{t_f} {\mathcal{L}(q, \dot{q})} \, dt 
    + \int_{t_0}^{t_f} {F(q, \dot{q}, u)} \,\delta q dt = 0,
\end{equation}
where the second integrand is defined as the virtual work and $\delta$ represents variations vanishing at the endpoints. Requiring that the variations of the action be zero for all $\delta q$, and integrating (\ref{action}) by parts, we arrive at the \textit{forced Euler-Lagrange equations}
\begin{equation}
    \label{F_EL}
  \frac{d}{dt} \left( \frac{ \partial \mathcal{L}(q, \dot{q}) }{\partial \dot{q}}\right)  - \frac{ \partial \mathcal{L}(q, \dot{q}) }{\partial q}  = F(q, \dot{q}, u),
\end{equation}
where $F$ is the forcing function that expresses the external forces in terms of generalized coordinates and $u$ are the input controls such as motor torques. 

The forced discrete Euler-Lagrange equations define an implicit equation that can be transformed into an explicit equation by taking into account the Lagrangian (\ref{lagrangian1}). Therefore, the obtained model can be reformulated into the \textit{forward manipulator equation},
\begin{equation}
\label{forward_manipulator}
   \ddot{q} = f^{(a)}(q, \dot{q}, u) = H^{-1}(q) \left( F(q, \dot{q}, u) - C(q,\dot{q}) + g(q)    \right),
\end{equation}
where $C \in \mathbb{R}^n$ is called the vector of \textit{coriolis} and \textit{centrifugal} forces, while $g(q) = \frac{\partial V(q)}{\partial q} \in \mathbb{R}^n$ is the \textit{gravitational potential vector}.
Equation (\ref{forward_manipulator}) can be used to obtain a trajectory $q(t)$ starting from an initial condition $x_{t_0} = [q(t_0), \dot{q}(t_0)]$ and actuation $u$, via numerical integration.

\subsection{Discrete Lagrangian Mechanics}
In an alternative approach of numerical integration, one can instead, discretize the action integral forcing the errors to respect the conservation of properties. This approach is called \textit{discrete mechanics} and the corresponding numerical integrators \textit{variational integrators} \cite{marsden_west_2001}.

Rather than having a configuration $q$ and velocity $\dot{q}$, in the context of discrete mechanics the trajectory of a continuous-time dynamical system is approximated by choosing a discrete Lagrangian, $\mathcal{L}_d$, that discretizes the action integral along the exact solution curve segment, $q$, i.e. $q_k \approx q(t_k)$ and $q_{k+1} \approx q(t_k+h)$  \cite{Ober-Blobaum2011}:   
\begin{equation}
    \mathcal{\mathcal{L}}_d(q_k, q_{k+1}, h) \approx \int_{t_k}^{t_k+h} {\mathcal{L}(q (t), \dot{q}(t))} \, dt .
\end{equation}
Similar to the continuous case, the discrete approach can also be extended to accommodate external forces by approximating the continuous forces by a left, $f_d^-$ and right discrete force, $f_d^+$\footnote{The left and right forces can be thought as the continuous control force acting during the time spans [$t_{k-1}$, $t_k$] and [$t_{k}$, $t_{k+1}$] \cite{Ober-Blobaum2011}.},
\begin{equation}
\label{virtual_work}
f_d^+ (q_k, q_{k+1}, u_k). \delta q_{k+1} + f_d^- (q_k, q_{k+1}, u_k).\delta q_{k} 
 \approx \int_{t_k}^{t_k+h} F (q(t), \dot{q}(t), u(t))\delta q \, dt
\end{equation}
where $u_k$ is the discretization of the continuous force inputs, ie.  $u_k \approx  u(t_k)$.
 
Analogous to the continuous case, and performing the calculus of variations, we arrive to the discrete equivalent of equation (\ref{F_EL}) that is known by \textit{Forced Discrete Euler-Lagrange Equations} (DEL):
\begin{equation}
\label{DEL}
        D_1\mathcal{L}_d(q_k, q_{k+1})  +  D_2\mathcal{L}_d(q_{k-1}, q_{k}) + \\ 
         f_d^+ (q_{k-1}, q_k, u_{k-1}) + f_d^- (q_k, q_{k+1}, u_{k}) = 0
\end{equation}
where the slot derivative $D_i$ represents the derivative of the
discrete Lagrangian, $\mathcal{L}$, with respect to the $i^{th}$ argument. Variational Integrators are a class of algorithms with the intuit of solving the implicit DEL equation for $q_{k+1}$.

\section{Baselines}
\textbf{Geometric Neural Ordinary Differential Equation:} The first challenge towards leveraging Neural ODEs to learn state-space models of mechanical systems is the fact that the dynamics arising from such dynamical systems are defined by a second order ordinary differential equation. However, the dynamics can, instead, be seen as a system of coupled first-order ODEs with state $x = [q, \dot{q}]^T$ and parameters $\theta$:
\begin{equation}
    \dot{x} = f^{(\dot{x})}(x, u; \theta) = \begin{bmatrix}
\dot{q}\\
f^{(a)}(x, u, \theta)
\end{bmatrix}.
\end{equation}
With actuation set to be constant between time-steps the final augmented state assumes the form of
\begin{equation}
\label{eq:augmentation}
    \begin{bmatrix}
\dot{x}\\
\dot{u}
\end{bmatrix} =     \begin{bmatrix}
f^{(\dot{x})}(x, u; \theta)\\
0
\end{bmatrix} = f(x, u; \theta).
\end{equation}
With the regression prediction being given as an integration between two time steps 
\begin{equation}
\hat{x}_{t+h} = \odesolve \bigl(f(x, u; \theta)\bigr).    
\end{equation}
We consider observation of the form of $(x_t, u, x_{t+1})$. However, we rely, before feeding the neural networks with the observations, on an embedding where we convert angles $\theta$ to the circle manifold $(\cos \theta, sin \theta)$, hence the term geometric.
The augmented state can then be used by a NODE \cite{chen2018neuralode} and trained by optimizing the parameters of $f_{\theta}$ through minimization of the mean squared error loss  $L=\sum_{i=1}^{N} \lVert \hat{x}_{i, t+h} - x_{i, t+h}\rVert_2 ^2$ over integration between two adjacent time-steps.

\textbf{Lagrangian Neural Ordinary Differential Equation:} Here, we parameterize the potential energy $\hat{V}(q)_{\psi}$ and the inertia matrix $\hat{H}(q)_{\beta}$ and build the Euler-Lagrange equations (\ref{forward_manipulator}) with the embedding working as input of the neural network, without changing the nature of the underlying equations, since by the chain rule gradients are calculated on the generalized coordinates. We designate these models by L-NODE hereinafter.

\section{Symplectic Momentum Neural Networks}
\subsection{Problem Formulation}
Given input and output spaces $\mathcal{X} \times \mathcal{Y}$, we build a neural network based architecture over the $(x, y)$ pairs that encodes a function, $g(x, y; \theta)$, that relates those pairs with the discrete Euler-Lagrange equations. Let the observations be of the form of $(q_{k-1}, q_k, q_{k+1}, u_{k-1}, u_k, u_{k+1})$, where $q_{k+1}$ are the regression targets. The input space can be defined by a set of two adjacent points in the configuration space and three adjacent discrete control inputs $x=(q_{k-1}, q_k, u_{k-1}, u_k, u_{k+1}) \in \mathcal{X}$ and the output space to be $y=(q_{k+1}) \in \mathcal{Y}$. Our goal is to build a neural network architecture that captures dependencies between the input and output space by using the Discrete Euler-Lagrange equation as a function that correlates both spaces.

We intent to build equation (\ref{DEL}) based on a neural network representation. Hence, learning consists in minimizing the violation of the DEL equations. This can be obtained by adjusting the free parameters, $\theta$ to minimize the loss function  $L_{\theta}=\frac{1}{N} \sum_{i=1}^{N} \lVert g(x_i,y_i; \theta)\rVert_2 ^2$.

Inference consists in finding configurations of the variables $q_{k+1}$, obtained by implicitly solving the parameterized DEL, through the Newton's root finding algorithm:
\begin{equation}
    \hat{q}_{k+1} = \rootfind \bigl(g(q_{k+1}; \theta^*)\bigr).
\end{equation}
We designate this models by Symplectic Momentum Neural Networks (SyMo), henceforth.
\subsection{Ensuring Symmetry and Positive Definiteness of the Inertia Matrix}
In mechanical systems the inertia matrix, $H$ is symmetric positive definite. Symmetry and positive semi-definetess is enforced by assuming a parameterized lower triangular matrix $\hat{L}(q; \beta)$ by a neural network, which predicts its $\frac{N^2 + N}{2}$ elements of its Cholesky factor with $N$ being the number of degrees of freedom of the mechanical system, then the Cholesky decomposition of this matrix is a decomposition $\hat{H}(q; \beta)= \hat{L}(q; \beta) \hat{L}^T(q; \beta)$ \cite{Lutter2019}. Positive definiteness is obtained if the diagonal of $\hat{L}$ is positive. This is guaranteed by using a non-negative activation function such as the SoftPlus for the diagonal elements of $\hat{L}$ and adding a small constant $\epsilon$ to the diagonal elements of $\hat{H}$. 

\subsection{Discretization}
In this work we consider the midpoint rule as an approximation of the action integral
\begin{equation}
\label{eq:discrete_lagrangian}
\mathcal{L}_d(q_k, q_{k+1}; \theta) = h \mathcal{L}\left(\frac{q_k + q_{k+1}}{2}, \frac{q_{k+1} - q_k}{h}; \theta \right) \approx \int_{0}^{h} {\mathcal{L}(q(t), \dot{q}(t))} \, dt.
\end{equation}
The above approximation leads to second-order accuracy \cite{marsden_west_2001}. From now on, we will denote the collocation points by $q_{k+1/2} = \frac{1}{2}q_k + \frac{1}{2}q_{k+1}$ and $q_{k-1/2} = \frac{1}{2}q_{k-1} + \frac{1}{2}q_k$.
For Lagrangians of Mechanical Systems (see equation (\ref{lagrangian1})) and using the neural network architecture aforementioned to parameterize the inertia matrix at the collocation points $q_{k-1/2}$ and $q_{k+1/2}$, through the lower triangular matrix $L(q_{k \pm 1/2}; \beta)$, and the potential energy, $V(q_{k \pm 1/2}; \psi)$, the discrete Lagrangians for the two adjacent time-steps are built upon the special structure of the approximated Lagrangian integral,
\begin{subequations}
\begin{equation}
\mathcal{\hat{L}}_d(q_{k-1}, q_{k}; \theta) = h \left [\left( \frac{q_{k} - q_{k-1}}{h}\right)\hat{H}\big(q_{k-1/2}; \beta \big) \left( \frac{q_{k} - q_{k-1}}{h}\right)^T - \hat{V}\big((q_{k-1/2}; \psi \big) \right],
\end{equation}
\begin{equation}
\mathcal{\hat{L}}_d(q_k, q_{k+1}; \theta) = h \left [\left( \frac{q_{k+1} - q_k}{h}\right)\hat{H}\big(q_{k+1/2}; \beta \big) \left( \frac{q_{k+1} - q_k}{h}\right)^T - \hat{V}\big((q_{k+1/2}; \psi \big) \right].
\end{equation}
\end{subequations}
Similarly the collocation points for the control inputs can be denoted by $u_{k+1/2} = \frac{1}{2}u_k + \frac{1}{2}u_{k+1}$
and $u_{k-1/2} = \frac{1}{2}u_{k-1} + \frac{1}{2}u_k$.
We can approximate the virtual work (equation \ref{virtual_work}) by the midpoint rule for control affine systems:
\begin{subequations}
\begin{equation}
    f_d^-(q_k, q_{k+1}, u_k, u_{k+1}) = \frac{h}{2}F\left(q_{k+1/2},\frac{q_{k+1}- q_k}{h}, u_{k+1/2}\right)= \frac{h}{4}(u_{k+1}+u_k),
\end{equation}
\begin{equation}
    f_d^+(q_{k-1}, q_k, u_{k-1}, u_k) = \frac{h}{2}F\left(q_{k-1/2},\frac{q_{k}- q_{k-1}}{h}, u_{k-1/2}\right) = \frac{h}{4}(u_{k-1}+u_k).
\end{equation}
\end{subequations}
Formally, we are now able to write the Discrete-Euler Lagrange equations in terms of the learning parameters, input and output spaces by:
\begin{equation}
\label{eq:DEL_NN}
    g_{\theta} = \frac{\partial}{ \partial q_k} \big[ \mathcal{\hat{L}}_d(q_{k-1}, q_k; \theta) + \mathcal{\hat{L}}_d(q_k, q_{k+1}; \theta) \big] + f_d^-(q_k, q_{k+1}, u_k, u_{k+1}) + f_d^+(q_{k-1}, q_k, u_{k-1}, u_k),
\end{equation}
All the gradient operations are performed by automatic differentiation.

\textbf{Learning from Embeddings:} Considering observations in $R^n$ we apply appropriated embeddings to the observations that follow angular coordinates, making the learning free of singularities. Here, we use an  embedding $s(q)$ but on the collocation points, $q_{k\pm1/2} \xrightarrow[]{} s(q_{k\pm1/2})$ to account for rotational coordinates. The embedding is not applied to translational coordinates.

\section{End-to-End Symplectic Momentum Neural Networks}
So far, we have established a neural architecture that takes into account the underlying discrete equations of motion arising from the discrete variational principles. However, we have not discussed yet the process of including the resulting integrators into the learning process. Assuming a loss $L_{\theta}=\frac{1}{N} \sum_{i=1}^{N} \lVert \hat{q}_{k+1}-q_{k+1}\rVert_2 ^2$, we need the gradient of the implicit solution $\hat{q}_{k+1}$ with respect to $\theta$. This is trivial to obtain by using the implicit function theorem and gives origin to End-to-End Symplectic Momentum Neural Networks (E2E-SyMo) (figure \ref{fig:symo}). In fact, theorem \ref{backward} gives us an insight of how to implicitly compute the gradients necessary for back-propagation without the need to back-propagate through all intermediate operations of the root-finding algorithm.
\begin{theorem}
\label{backward}
\textbf{Gradient of the RootFind solution.} Let $q_{k+1}\in \mathbb{R}^n$ be the root of the physical constrained parameterized RootFind procedure based on the implicit DEL equations, $g(q_{k+1}, x; \theta) \in \mathbb{R}^n$ (equation \ref{eq:DEL_NN}), mapping $(q_{k-1}, q_k)$ to $(q_k, q_{k+1})$. The gradients of the scalar loss function $L_{\theta}$ with respect to the parameters $\theta$ are obtained by vector-Matrix products as follows:
\begin{equation}
\label{eq:back_grad}
    \frac{\partial L}{\partial \theta} = -\frac{\partial L}{\partial q_{k+1}} {\underbrace{\left [\frac{\partial \mathcal{L}_d(q_k,  q_{k+1}; \theta)}{\partial q_{k+1} \partial q_k}  + \frac{\partial f_d^-(q_k, q_{k+1}, u_k, u_{k+1})}{\partial q_{k+1}}\right ]^{-1} \frac{\partial g(q_{k+1}; \theta)}{\partial \theta}}_{-\frac{\partial q_{k+1}}{\partial \theta}}}
\end{equation}
\end{theorem}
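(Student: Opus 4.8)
The plan is to treat this as a textbook application of the implicit function theorem to the equilibrium condition that defines the $\rootfind$ layer, in the same spirit as the adjoint-style arguments used for implicit and deep-equilibrium layers. The key starting observation is that the predicted $q_{k+1}$ is not free: it is pinned down as the root of the parameterized residual, so that $g(q_{k+1}, x; \theta) = 0$ holds \emph{identically} in $\theta$ once the root has been located (with the input $x = (q_{k-1}, q_k, u_{k-1}, u_k, u_{k+1})$ held fixed). Precisely because this is an identity and not merely an approximate equality, we may differentiate it directly and thereby avoid unrolling or differentiating through the Newton iterations; it also suffices to argue for a single data point, the full gradient following by linearity of the sum defining $L_\theta$.

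First I would differentiate the identity $g(q_{k+1}(\theta), x; \theta) = 0$ with respect to $\theta$, carefully separating the implicit dependence of $q_{k+1}$ on $\theta$ from the explicit occurrence of $\theta$ in $g$. The chain rule gives $\frac{\partial g}{\partial q_{k+1}}\,\frac{\partial q_{k+1}}{\partial \theta} + \frac{\partial g}{\partial \theta} = 0$, and hence, assuming the Jacobian $\frac{\partial g}{\partial q_{k+1}}$ is invertible, $\frac{\partial q_{k+1}}{\partial \theta} = -\left[\frac{\partial g}{\partial q_{k+1}}\right]^{-1}\frac{\partial g}{\partial \theta}$. Composing with the outer loss, $\frac{\partial L}{\partial \theta} = \frac{\partial L}{\partial q_{k+1}}\,\frac{\partial q_{k+1}}{\partial \theta}$, then reproduces the vector–matrix product claimed in (\ref{eq:back_grad}), with the underbraced factor correctly identified as $-\frac{\partial q_{k+1}}{\partial \theta}$.

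The remaining algebraic step is to show that the Jacobian $\frac{\partial g}{\partial q_{k+1}}$ is exactly the bracketed matrix in the statement, which I would establish by differentiating the explicit residual (\ref{eq:DEL_NN}) term by term. Of its four summands, the two assembled from $(q_{k-1}, q_k)$ — namely $\frac{\partial}{\partial q_k}\mathcal{\hat{L}}_d(q_{k-1}, q_k; \theta)$ and $f_d^+(q_{k-1}, q_k, u_{k-1}, u_k)$ — carry no dependence on $q_{k+1}$ and so vanish under $\frac{\partial}{\partial q_{k+1}}$. Only the mixed second derivative $\frac{\partial^2 \mathcal{\hat{L}}_d(q_k, q_{k+1}; \theta)}{\partial q_{k+1}\partial q_k}$ and the right discrete-force term $\frac{\partial f_d^-(q_k, q_{k+1}, u_k, u_{k+1})}{\partial q_{k+1}}$ survive, recovering the bracketed expression.

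I expect the genuine content, as opposed to the bookkeeping, to lie in justifying the invertibility of $\frac{\partial g}{\partial q_{k+1}}$, since without it the inverse in (\ref{eq:back_grad}) is meaningless and the implicit function theorem simply does not apply. This is exactly the \emph{regularity} condition on the discrete Lagrangian that already underpins the well-posedness of the variational integrator: for the midpoint discretization the dominant, $O(1/h)$, contribution to the mixed derivative is proportional to $-\hat{H}(q_{k+1/2}; \beta)$, while the dependence of $\hat{H}$ on the collocation point contributes only $O(1)$ terms. Hence the positive definiteness of the learned inertia matrix — guaranteed by the Cholesky parameterization of the inertia matrix together with the $\epsilon$-shift on its diagonal — renders the Jacobian nonsingular for sufficiently small $h$. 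I would therefore carry this regularity as the standing hypothesis under which the inverse, and consequently the gradient, is well defined.
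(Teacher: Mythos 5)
Your proposal is correct and follows essentially the same route as the paper: differentiate the identity $g(x, q_{k+1}(\theta); \theta) = 0$ in $\theta$, apply the chain rule, and solve for $\frac{\partial q_{k+1}}{\partial \theta}$ before composing with the loss. In fact your write-up is more complete than the paper's own proof, which stops at the generic implicit-differentiation step and neither identifies $\frac{\partial g}{\partial q_{k+1}}$ with the bracketed matrix nor addresses its invertibility — your term-by-term computation and the regularity argument via positive definiteness of the Cholesky-parameterized inertia matrix supply exactly the details the paper leaves implicit.
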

\begin{proof}
Under the conditions of the implicit function theorem, we can formulate the relation between the root $q_{k+1}^*$ and the remaining variables as the graph of a function of the remaining variables, $q_{k+1}^* = f(x; \theta)$, that define the implicit DEL equations. 

The theorem also tell us how to compute derivatives of $q_{k+1}(x; \theta)$ by implicit differentiation, and based on that obtain the partial derivative of $q_{k+1}$ w.r.t. $\theta$.
  \begin{align*}
    g\left(x, q_{k+1} ; \theta \right) &= 0 \\
    \frac{\partial}{\partial \theta} \bigl[ g\left(x, q_{k+1}; \theta\right)\bigr] &= 0 && \text{Differentiate both sides.}\\
    \frac{\partial g(x, q_{k+1})}{\partial \theta} + \frac{\partial g(x, q_{k+1}; \theta)}{\partial q_{k+1}} \cdot \frac{\partial q_{k+1}}{\partial \theta} &= 0 &&\text{Using the Chain Rule.} \\
     -\left[ \frac{\partial g(x, q_{k+1}; \theta)}{\partial q_{k+1}}\right]^{-1} \frac{\partial g(x, q_{k+1})}{\partial \theta} &= \frac{\partial q_{k+1}}{\partial \theta}   &&\text{Rearranging the equations.}
  \end{align*}
\end{proof}

\textbf{Forward Pass.} Contrary to SyMos where the output is the DEL equations, here, the output are the roots of the parameterized DEL equations solved by Newton's root-finding algorithm, i.e., the output is
\begin{equation}
    q_{k+1} = \rootfind_{q_{k+1}} \bigl(g(x, q_{k+1}; \theta)\bigr).
\end{equation}

This infinite depth RootFind layer adds two new hyperparameters. Specifically, the tolerance and the maximum number of iterations of the Newton's method. The initial guess for the algorithm is employed by explicit Euler integration, i.e, $q^{(0)}_{k+1} = q_k + h \dot{q}_k$, with $\dot{q}_k = \frac{1}{h} (q_{k} -q_{k-1})$, which is simply $q^{(0)}_{k+1} = 2q_k - q_{k-1}$.

\textbf{Backward Pass.} The backward pass simply consists in applying the insight given by theorem \ref{backward}, with gradients being calculated through automatic differentiation.

\begin{figure}[t]
  \centering
  \includegraphics[width=1\textwidth]{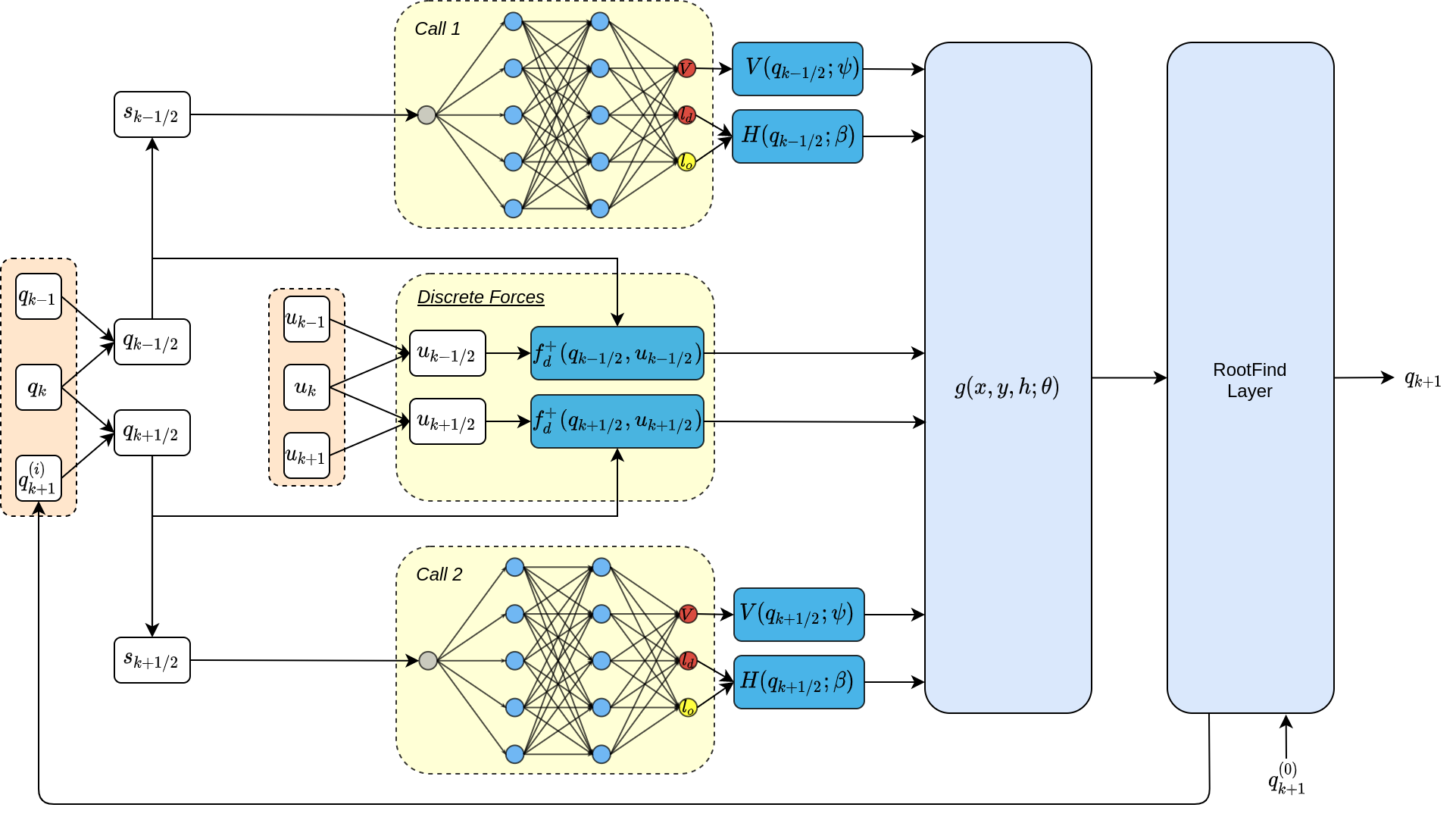}
  \caption[SyMo - Symplectic-Momentum Neural Networks.]{E2E Symplectic Momentum Neural Network - Given the embeddings $s$, we perform two calls to the neural network to get the inertia and potential energy correspondent to the two adjacent time-steps. These terms, alongside the discrete forces form then the discrete Euler-Lagrange equations, $g(x, y, h; \theta)$ as a function of the learning parameters. The DEL are then used by the implicit layer defined by the root finding procedure. Given an initial estimate $q_{k+1}^{(0)}$ the rootfind layer iterates over the DEL equations to obtain $\hat{q}_{k+1}$.}
  \label{fig:symo}
\end{figure}

\section{Experimental Setup}
We use two simulated dynamical systems to evaluate the performance of SyMos when compared to the baselines arising from conventional integrators. Specifically, we use the physical pendulum and the cartpole as systems.

\textbf{Dataset Generation:} The initial state is composed by the configurations and velocities $x_0=[\theta, \Dot{\theta}]$. We randomly generated initial conditions with a uniform distribution.  The initial conditions are combined with a constant control input for each trajectory in a uniform distribution in the range of $u \in [-2,2]$.  The ground truth trajectories are simulated by SciPy’s \textit{solve\_ivp} adaptive solver \cite{scipy} with method RK45 using Open AI Gym \cite{openai}. In order to show that the methods that incorporate priors can learn from limited amount of data we vary the size of the training set by doubling from 8 to 128 initial state conditions. Each trajectory is integrated for 32 time steps with a time span of $h=0.1$ for the pendulum and $h=0.05$ for the cartpole. 

\textbf{Methods:} For the NODE and L-NODE we choose the “RK4” and the midpoint methods of numerical integration for training and making predictions. We use the labels NODE-RK4, NODE-Midpoint, L-NODE-RK4 and L-NODE-Midpoint to describe the integrator used. We set the size of mini-batches to be four times the number of initial state conditions. Besides the Softplus for the diagonal elements of $\hat{L}$ we use linear activations in the output layer. At the hidden level we use the hyperbolic tangent. The neural network consists of a two-hidden-layer with 128 hidden neurons for the pendulum and 256 for the cartpole.

\textbf{Metrics:} To have a fair comparison with SyMos and once learning the L-NODEs and NODEs involves velocities, we set the train and test error as the Mean Squared Error (MSE) between the estimated configurations and the ground truth. The test loss is based on 128 test trajectories where each is integrated for 32 time-steps. To evaluate the performance of each model in terms of long term prediction, we construct the metric of integration error by using 16 random unseen initial state conditions without control actuation and integrating them for 500 time-steps. We logged the inertial loss for the models that make usage of physical priors through the inertia matrix, where the loss is defined by the MSE between the predicted inertia matrix and the ground truth. We also logged the energy loss per trajectory for the 16 trajectories.

\textbf{Model Training:} We train our models using Adam optimizer \cite{Kingma2015} for 2000 epochs with initial learning rate of 0.0001 for the pendulum and of 0.001 for the cartpole and using the \textit{ReduceLROnPlateau} scheduler\footnote{\url{https://pytorch.org/docs/stable/generated/torch.optim.lr_scheduler.ReduceLROnPlateau.html}} with patience 50 and factor 0.7. For the E2E-SyMo we use a training tolerance of $1e-5$ and a maximum number of 10 iterations. For integration we relax the tolerance to $1e-4$. For SyMo, after training, we perform the root finding operation to the learned DEL equations with the same hyperparameters.
\section{Results}
\begin{figure}[htb!]
  \centering
  \includegraphics[width=1\textwidth]{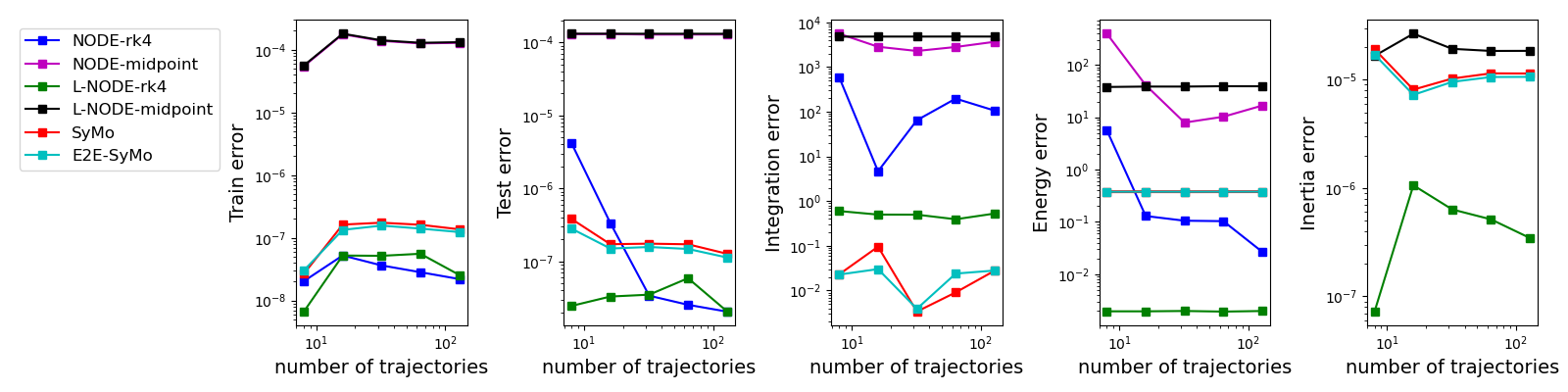}
  \caption[SyMo - Symplectic-Momentum Neural Networks.]{Mean Squared error in the configuration space for training, testing and integration and energy and mass MSE for the integration trajectories for the pendulum. SyMos show a lower integration error.}
  \label{fig:train_test_pendulum}
\end{figure} 

Figure \ref{fig:train_test_pendulum} shows the variation in train error, test error, integrator error, energy error and inertia error with changes in the number of initial state conditions in the training set. We can see that the incorporation of prior knowledge generally yields better train and test losses, specially for a small number of trajectories. For instance, the NODE-RK4 requires more trajectories to achieve similar results with the SyMos and L-NODE-RK4. This means that the priors provide the models with the capability of learning the dynamics with less training data. However, in terms of integration, NODE-RK4 fails to keep up with those same models. This is normal as the absence of priors do not provide the models with the capability of preserving important physical properties. SyMos have a lower integration error which emphasizes their good long-term behaviour. This is expected as the priors take into account the geometric structure of the original continuous time equations leading to better integration behaviour.

Figure \ref{fig:train_pred_cartpole} shows the results for the cartpole system. E2E-SyMo present the lowest test, integration and inertia error. Even outperforming the models with fourth-order integrators which shows how important is the preservation of geometric structures when integrating these models. In terms of energy conservation L-NODE-RK4 outperform the remainder methods. NODE-RK4 beats all the other models in terms of train loss but it's outperformed in the other metrics. This shows that the models without a prior are prone to overfitting.
\begin{figure}[htb!]
  \centering
  \includegraphics[width=1\textwidth]{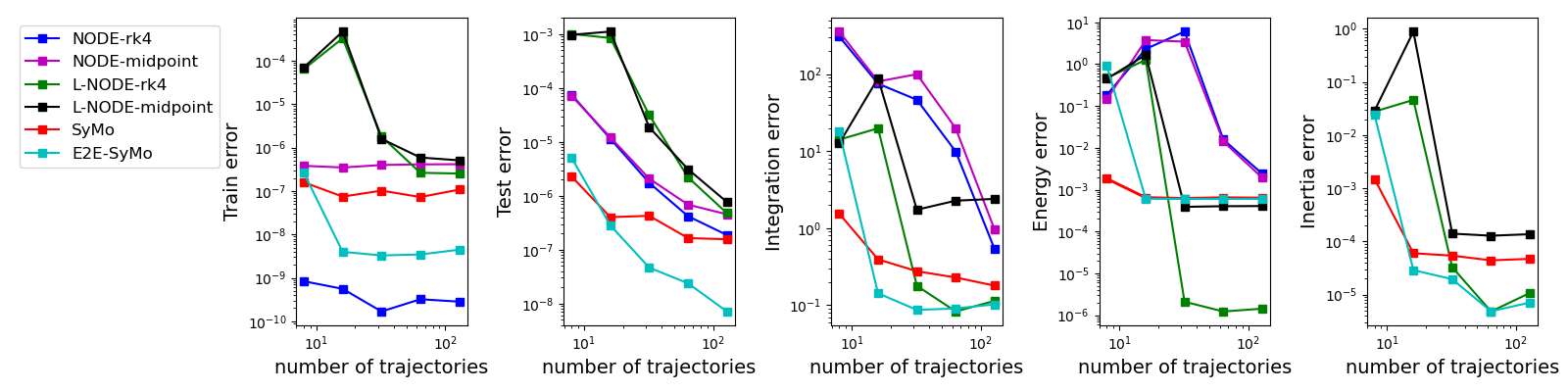}
  \caption[SyMo - Symplectic-Momentum Neural Networks.]{Mean Squared error in the configuration space for training, testing and integration and energy and inertial MSE for the integration trajectories for the cartpole. SyMos show a better integration behaviour and inertia modeling error.}
  \label{fig:train_pred_cartpole}
\end{figure}

\section{Conclusion}
In this work we introduced Symplectic Momentum Neural Networks as architectures that incorporate discrete mechanics with deep learning techniques. Further, we derived an implicit layer to accommodate variational integrators within the learning framework for systems where the inertia depends on the configuration. We compared these models against the Neural ODEs with and without prior knowledge about the underlying equations of motion. Our models showed better long-term behaviour and conservation of properties, proving that these models can be used for data driven numerical integration in an efficient and interpretable way. Future work should address the incorporation of dissipation into the learning framework. Code is made available in \url{https://github.com/ssantos97/SyMo}.

\acks{This work was supported by the LARSyS - FCT Project UIDB/50009/2020 and FCT PhD grant ref. PD/BD/150632/2020}

\bibliography{bib}

\end{document}